\begin{document}
\runningauthor{Albert Tseng, Tao Yu, Youngsuk Park}

%

%

\twocolumn[

\aistatstitle{Training LLMs with MXFP4}

\aistatsauthor{ Albert Tseng\textsuperscript\textdagger \And Tao Yu \And  Youngsuk Park }

\aistatsaddress{ Cornell University \\ \href{mailto:albert@cs.cornell.edu}{\texttt{albert@cs.cornell.edu}} \And  AWS AI \\ \href{mailto:taou@amazon.com}{\texttt{taou@amazon.com}} \And AWS AI \\ \href{mailto:pyoungsu@amazon.com}{\texttt{pyoungsu@amazon.com}} } ]

\begin{abstract}
Low precision (LP) datatypes such as MXFP4 can accelerate matrix multiplications (GEMMs) and reduce training costs. 
However, directly using MXFP4 instead of BF16 during training significantly degrades model quality. 
In this work, we present the first near-lossless training recipe that uses MXFP4 GEMMs, which are $2\times$ faster than FP8 on supported hardware.
Our key insight is to compute unbiased gradient estimates with stochastic rounding (SR), resulting in more accurate model updates.
However, directly applying SR to MXFP4 can result in high variance from block-level outliers, harming convergence.
To overcome this, we use the random Hadamard tranform to theoretically bound the variance of SR.
We train GPT models up to 6.7B parameters and find that our method induces minimal degradation over mixed-precision BF16 training.
Our recipe computes $>1/2$ the training FLOPs in MXFP4, enabling an estimated speedup of $>1.3\times$ over FP8 and $>1.7\times$ over BF16 during backpropagation.
\end{abstract}

\section{Introduction}

\begin{figure}[t]
\centering
\includegraphics[width=\linewidth]{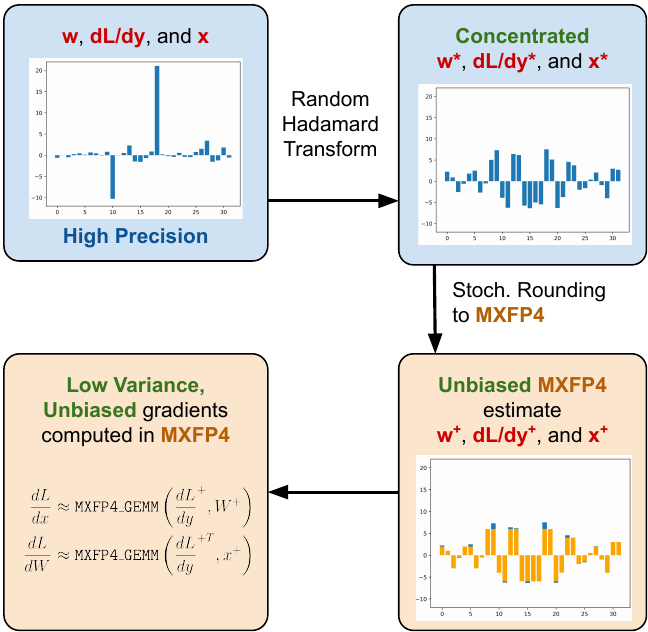}
\caption{Our method uses stochastic rounding (SR) to compute unbiased gradients and the random Hadamard transform to bound the variance of SR. This enables us to perform more accurate model updates with MXFP4 in the backward pass, enabling a speedup of $>1.3\times$ over FP8 and $>1.7\times$ over BF16.}
\label{fig:overview}
\end{figure}

The latest large language models (LLMs) have billions of parameters that are trained on trillions of tokens, making them incredibly expensive to train. 
For example, training Llama 3.1 405B required $3\times 10^{24}$ floating point operations (FLOPs), or over 10000 GPUs for multiple months \citep{llama3}. 
Recent hardware accelerators have started supporting low precision floating point ($\le$ 16 bit) matrix mulitiplications (GEMMs).
Compared to 32-bit GEMMs, hardware-accelerated low precision (LP) GEMMs run at significantly higher throughputs.
For example, FP8 GEMMs can be $4\times$ faster than FP32 GEMMs and also more energy efficient \citep{blackwell}.

Since LLM training is compute bound in matrix multiplications, LP GEMMs can accelerate training.
Almost all modern LLMs are trained with 16 bit GEMMs \citep{llama2, llama3}, and some even use FP8 GEMMs \citep{msamp}.
Using 16 bit GEMMs halves the cost of matrix multiplications and improves end-to-end throughput by almost $2\times$ \citep{mpiclr}.
However, there is no free lunch with low precision training.
Reducing the GEMM precision increases quantization distortion and can cause numerical instability.


To counteract these issues, the recently introduced Microscaling (MX) family of datatypes uses a shared blockwise scale across multiple floating point numbers \citep{ocpmx}.
For example, MXFP4 uses an INT8 scale $s$ for every contiguous block $v$ of 32 FP4 numbers to represent $2^{s-1}v$, where 1 is the exponent bias for FP4.
This scale enables a significantly wider range of representable numbers at the cost of an extra $8/32 = 0.25$ bits per entry.
However, MX alone is not enough to enable lossless low precision training with FP4.
As we show in Section \ref{sec:experiments}, directly using MXFP4 in even only the backward pass of decoder linear layers significantly degrades model quality.

In this work, we introduce two techniques that enable near-lossless distributed training with MXFP4.
Our method hinges on computing low-variance, unbiased gradient estimates that enable more accurate model updates.
First, we use stochastic rounding to compute unbiased GEMMs.
Then, we use a memory-bound construction of random Hadamard transform to reduce the effect of outliers and theoretically bound the variance of SR, aiding convergence.
We apply our method to decoder linear layers and show that it incurs minimal degradation over BF16 mixed precision training when pretraining GPT models up to 6.7B parameters.
Our recipe computes over half the training FLOPs in MXFP4 and can significantly accelerate pretraining.
In summary, we:


\begin{itemize}
\item Introduce a MXFP4 training recipe that uses stochastic rounding and the random Hadamard transform compute unbiased, low-variance gradient estimates during backpropagation.
\item Pretrain GPT models up to 6.7B and show that our training recipe closes the MXFP4-BF16 gap to $<0.1$ validation perplexity. 
\item Show that our RHT and SR constructions add minimal overhead to MXFP4 GEMMs, giving a theoretical speedup of $>1.3\times$ and $>1.7\times$ over a FP8 and BF16 backward pass, respectively.
\end{itemize}

\section{Background and Related Works}
\subsection{Low Precision Datatypes and IEEE 754 Floating Point Numbers}

Traditionally, low precision (LP) datatypes refer to datatypes that use significantly fewer than 32 bits to represent a single number. 
For example, FP16 uses 16 bits to represent floating point numbers.
While there are many different low precision datatypes, including stateful ones \citep{qtip}, a certain subset has been standardized under the IEEE754 floating point (FP) \citep{IEEE754}. These datatypes often come with hardware acceleration for compute-bound workloads.

IEEE floats (Table \ref{tab:fpdtype}) are defined with 1 sign bit, $e$ exponent bits, and $m$ mantissa bits.
In shorthand, a $1+m+e$ bit datatype is written as E$e$M$m$.
The actual ``normal'' value represented by an IEEE float with sign bit $S$, mantissa $M$, and exponent $E$ is 
$$(-1)^S (1+M) 2^{E - \texttt{bias}},$$ 
where \texttt{bias} is a datatype-dependent integer exponent bias offset specified by \cite{IEEE754}. 
This exponent-mantissa construction means FP datatypes are scale-invariant with respect to quantization signal-to-noise ratio (SNR) bar over/underflow \citep{graphcore}.

\begin{table}[t]
\centering
\caption{Common HW supported FP datatypes.}
\label{tab:fpdtype}
\begin{tabular}{@{}ccccc@{}}
\toprule
\multirow{2}{*}{Name} & \multicolumn{4}{c}{Bits}    \\ \cmidrule(lr){2-5}
                      & Total & Sign & Exponent & Mantissa                                                                                    \\ \midrule
FP64                  & 64    & 1    & 11   & 52                                                               \\
FP32                  & 32    & 1    & 8    & 23                                                                 \\
FP16                  & 16    & 1    & 5    & 10                                                                          \\
BF16                  & 16    & 1    & 8    & 7                                                                      \\
FP8 \tiny{E4M3}       & 8     & 1    & 4    & 3                                                                        \\
FP8 \tiny{E5M2}       & 8     & 1    & 5    & 2                                                                       \\
FP4                   & 4     & 1    & 2    & 1                                                               \\ \bottomrule
\end{tabular}
\end{table}

\subsection{LLM Training}

The most common way to train a LLM involves computing a loss function, computing the gradient of the loss function with respect to the model parameters, and then updating the parameters with gradient information.
For example, when pretraining a decoder-only LLM, one might use an autoregressive cross-entropy-based loss and the AdamW optimizer \citep{llama2, llama3}.
While the exact training setup may differ, the core bottlenecks of training are the compute-bound forward and backward passes that calculate the loss and gradients, respectively.
Within these two components, the majority of the FLOPs are in the linear layers -- at 30B parameters, over 90\% of the FLOPs are in the linear layers \citep{flopcount}.

The forward pass for a linear layer with input dimension $n$ and output dimension $m$ computes $y = xW^T + b$, where $W \in \mathbb{R}^{m \times n}$ is a parameter matrix and $b \in \mathbb{R}^{m}$ is an optional bias term.
To backpropagate through a linear layer, we need to calculate the gradient of $y$ with respect to $x, W,$ and $b$.
These are given by $\frac{dL}{dx} = \frac{dL}{dy}W$, $\frac{dL}{dW} = \frac{dL}{dY}^Tx$, and $\frac{dL}{db} = \mathbbm{1}\frac{dL}{dy}$, where $\mathbbm{1}$ is the all-ones vector and $\frac{dL}{dy}$ is the backprop output from the previous (going backwards) operation in the chain rule \citep{backprop}.
Each linear layer requires 3 computationally intensive matrix multiplications ($xW^T, \frac{dL}{dx}$, and $\frac{dL}{dW}$), 2 of which are in the backward pass.

\subsection{Mixed Precision Training}

One way to accelerate training is with ``mixed precision'' (MP) training.
In MP, parameters are kept in high precision and GEMM operands are converted to a LP datatype for a LP GEMM.
MP is a simple way to achieve the throughput benefits of LP datatypes since quantization usually has minimal overhead.
End to end, BF16 MP is often $>70\%$ faster than FP32 training \citep{megatronpaper}.
However, quantization introduces distortion in the GEMM operands and thus outputs.
Since the forward and backward passes all happen in low precision, both the loss and the model updates can deviate from their ``true'' values.
At low bitrates $\ll 16$, distortion can degrade model quality and even cause divergence, necessitating advanced training recipes.
For example, FP8 MP recipes typically use E4M3 (more precision) in the forward pass and E5M2 (more range) in the backward pass due to the different properties of gradients, weights, and activations \citep{msamp, te}.

At 4 bits, quantization distortion becomes even more difficult to manage.
\cite{int4training} train smaller non-GPT transformers with INT4 GEMMs by using the non-randomized Hadamard transform in the forward pass and leverage score sampling (LSS) in the backward pass.
Since LSS introduces additional overhead, they were only able to achieve an end-to-end speedup of 30\% over FP16, which is on par with FP8 mixed precision training \citep{msamp}. 
We are also aware of a concurrent work by \cite{msfp4} that trains LLMs with FP4.
There, the authors train billion parameter GPT models with FP4 in both the forward and backward pass by using a differentiable gradient estimator and keeping outliers in high precision, resulting in a perplexity gap of $>0.5$.
Since their work was released after our paper went through the review process, we reserve a full comparison for future work.


\subsection{Stochastic Rounding}
Mixed precision requires quantizing from a higher precision tensor to a LP tensor at every step -- this opens up flexibility in how the actual quantization happens.
The canonical ``nearest rounding'' (NR) method rounds each high precision number to its closest representable value in the LP datatype \citep{IEEE754}.
However, NR is not \textit{unbiased}, which we later show to be detrimental to low precision training.
One way to achieve unbiased rounding is with ``stochastic rounding'' (SR), which randomly rounds a number to a representable value in the LP datatype so that, in expectation, the rounded number equals the original number \citep{sr}.

SR can be implemented efficiently through \textit{dithering}, which adds random uniform noise to the input number and then performs NR \citep{sr}.
For example, Amazon's Trainium line of chips can perform SR with dithering while adding less than 2\% overhead to a BF16 GEMM. 
Equation \ref{eqn:unifdither} describes SR with dithering for a uniform integer quantizer; the non-uniform case requires modifying the noise scale but is otherwise essentially the same.
\begin{align}
\label{eqn:unifdither}
\delta &\sim \mathcal{U}(-0.5, 0.5) \\
\text{SR}_{\text{dither}}(x) &= 
\begin{cases} 
      \lfloor x \rfloor & x+\delta < \lfloor x \rfloor + \frac{1}{2} \\
      \lceil x \rceil & x+\delta \ge \lfloor x \rfloor + \frac{1}{2} 
\end{cases}
\end{align}

SR can also be used anywhere where numbers are quantized.
For example, near the end of training, the model update norm is much smaller than the parameter norm and information in low precision updates can be``lost'' \citep{collage}.
Here, stochastic rounding can be used to preserve the update \textit{in expectation}, which uses less memory than keeping a high precision copy of the parameters. 

\subsection{Microscaling (MX) FP Formats}

The recently introduced microscaling floating point family of datatypes builds upon IEEE floats by adding a groupwise scale to a base IEEE float \citep{ocpmx}.
This scale allows a MXFP tensor to take on a wider range of values without significantly increasing the total bitrate, with the caveat that entries in a group should be roughly the same magnitude for the scale to be useful.
In practice, MX scaling is more important as the base datatype bitrate decreases.
Whereas FP8 E4M3 has a dynamic range of $\frac{448}{2^{-9}} = 2.3\times10^6$, FP4 has a dynamic range of $\frac{6}{0.5} = 12$. 
MX scaling enables MXFP4 to represent a much wider range of values \textit{across blocks}.

The core hardware-supported MXFP formats generally follow similar patterns.
Scales are shared across contiguous entries in memory (usually 32), and quantizing a scalar tensor to a MX tensor depends on the largest element in each group \citep{ocpmx,nvptx}.
Algorithm \ref{alg:float2mx} describes the ``reference'' algorithm for quantizing a scalar tensor to MX, which can be implemented efficiently on modern AI accelerators \citep{nvcutlass}.
Algorithm \ref{alg:float2mx} scales each group based on its maximum magnitude element and then performs nearest rounding to obtain a MX tensor. 

\begin{algorithm}[t]
\caption{Convert vector of scalar floats $V \in \texttt{HP\_DTYPE}^k$ to an MX block
$\{X, P \in \texttt{LP\_DTYPE}^k\}$ (from \citep{ocpmx})}
\label{alg:float2mx}
\begin{algorithmic}[1]
\REQUIRE $\texttt{emax}_\texttt{elem}$ = exponent of largest normal in \texttt{LP\_DTYPE}, $k=32$ for hardware support.
\STATE \texttt{shared\_exp} $\gets \lfloor \log_2(\max_{i}(|V_i|)) \rfloor - \texttt{emax}_\texttt{elem}$
\STATE $X \gets 2^{\texttt{shared\_exp}}$
\FOR{$i=1$ to $k$}
  \STATE $P_i$ = \texttt{quantize\_to\_LP}$(V_i / X)$
\ENDFOR
\RETURN $X,\>\{P_i\}_{i=1}^k$
\end{algorithmic}
\end{algorithm}

\section{Training with MXFP4}

The rest of this paper describes our approach that enables near-lossless \textit{training} with MXFP4-accelerated GEMMs.
Although our paper focuses on MXFP4, our analysis also applies to other low precision datatypes such as MXINT4.
We chose MXFP4 due to its relevance and hardware support on the latest accelerators.
To the best of our knowledge, MXFP4 has only been successfully used for near-lossless inference \citep{mxtrain,bwmlperf}.
Although certain works have achieved near-lossless training with MXFP4 weights, these require the activations and gradients to kept in higher precision.
These recipes run at the throughput of the higher precision operand, making them slower than pure-FP4 recipes.

Our method hinges on obtaining unbiased, low-variance gradient estimates with pure-MXFP4 GEMMs in the backward pass, enabling more accurate model updates. 
Since the backward pass consists of $>1/2$ training FLOPs, our recipe can significantly accelerate training without reducing the representational power of the model from LP forward passes \citep{kumar2025scaling}.
To do this, we first modify the OCP MX quantization algorithm to perform unbiased quantization with scaling and stochastic rounding. 
Then, we show that by first transforming the GEMM operands with a memory-bound construction of the random Hadamard transform (RHT) \textit{before quantization}, we can bound the variance of the GEMM output.
Our method adds minimal overhead while significantly improving the quality of trained models, making MXFP4 practical for training.



\subsection{Unbiased Quantization to MXFP4}

Algorithm \ref{alg:float2mx} describes the ``reference'' MX quantization algorithm to convert a scalar matrix to an MX matrix.
Algorithm \ref{alg:float2mx} finds, for each group of 32 entries, value with the largest magnitude $m=\max_i (|V_i|)$.
Then, it calculates a shared exponent as a function of $m$ and $\texttt{emax}_{\texttt{elem}}$, the largest exponent of a normal number in the base data format.
For example,  $\texttt{emax}_{\texttt{elem}}= 2$ for FP4 since its maximum normal value is $6 = 2^2*1.5$.

Finally, group elements are normalized by the shared exponent and rounded to the base datatype.

For MXFP4, line 1 of Algorithm \ref{alg:float2mx} returns $\texttt{shared\_exp} \gets \lfloor \log_2(m)\rfloor - 2$.
Observe that after dividing the entire group by $2^{\texttt{shared\_exp}}$, $m$ becomes 
\begin{equation}
m \gets \frac{m}{2^{\texttt{shared\_exp}}} < \frac{m}{2^{\log_2(m) - 3}} = 8
\end{equation}
Since the maximum representable normal value in FP4 is 6, values scaled to between 6 and 8 will get clipped, making Algorithm \ref{alg:float2mx} inherently biased.
Although the proportion clipped depends on the input matrix, we can empirically check that for a wide distribution of matrices, roughly 3\% of the entries will get clipped. 

We can make Algorithm $\ref{alg:float2mx}$ unbiased with two simple modifications, both of which can be efficiently implemented in hardware.
First, we scale $V_i/X$ by $3/4$ to prevent clipping.
Then, we use stochastic rounding to quantize $Q'$ to FP4, which gives an unbiased estimate of $Q'$.
Algorithm \ref{alg:sr2mx} summarizes these modifications.
The resulting MX matrix is an unbiased estimate of 3/4 the original matrix.
Since SR is implemented with uniform independent dithering in hardware, the resulting GEMM output is an unbiased estimator of $(3/4)^2=9/16$ of the correct output.
To get an unbiased output, we can simply scale the high precision accumulator output by 16/9.

\begin{restatable}{lemma}{srlemma}
\label{lem:srlemma}
Assume stochastic rounding is implemented with dithering with independent noise. Then, Algorithm \ref{alg:sr2mx} produces a MXFP4 matrix that is an unbiased estimate of $3/4$ its input. Furthermore, Algorithm \ref{alg:rhtbw} with Algorithm \ref{alg:sr2mx} as a subroutine produces an unbiased estimate of $\frac{dL}{dx}$ and $\frac{dL}{dW}$.
\end{restatable}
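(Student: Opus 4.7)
\textbf{Proof plan for Lemma \ref{lem:srlemma}.} The plan is to prove the two claims sequentially: (i) establish that a single application of Algorithm \ref{alg:sr2mx} is unbiased up to the $3/4$ factor, and (ii) lift this to the two backward GEMMs by exploiting independence of the dithering noise across entries and operands, together with orthogonality of the random Hadamard transform used in Algorithm \ref{alg:rhtbw}.

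For part (i), I would first observe that the shared exponent $X = 2^{\texttt{shared\_exp}}$ depends only on the deterministic input vector $V$ and not on the SR randomness, so it can be treated as a constant when taking expectations. By the argument already given in the text, after dividing by $X$ the entries of $V/X$ lie in $[-8,8)$, and so after the additional scaling by $3/4$ in Algorithm \ref{alg:sr2mx} every entry of $Q' := (3/4) V/X$ lies within $[-6,6]$, which is the interval spanned by the FP4 representable values. For any real $q$ in the convex hull of the representable grid, SR with dithering satisfies $\mathbb{E}[\mathrm{SR}(q)] = q$ (this is the standard unbiasedness property of dithered quantization, equation (\ref{eqn:unifdither})). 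Combining these facts, the $i$-th output entry satisfies $\mathbb{E}[X \cdot \mathrm{SR}(Q'_i)] = X \cdot Q'_i = (3/4) V_i$, as claimed.

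For part (ii), the backward GEMMs have the form $\sum_k A_{ik} B_{kj}$. Using Algorithm \ref{alg:sr2mx} on each operand produces $\hat A_{ik}$ and $\hat B_{kj}$ whose SR noise is drawn independently, both across entries within a tensor and across the two tensors, because dithering is performed with independent uniform noise. Then
\begin{equation}
\mathbb{E}[\hat A_{ik} \hat B_{kj}] = \mathbb{E}[\hat A_{ik}] \, \mathbb{E}[\hat B_{kj}] = (3/4)^2 A_{ik} B_{kj},
\end{equation}
and linearity of expectation gives $\mathbb{E}[\sum_k \hat A_{ik} \hat B_{kj}] = (9/16) \sum_k A_{ik} B_{kj}$. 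Multiplying the high-precision accumulator by $16/9$ (as described in the text preceding the lemma) yields an unbiased estimator. To handle Algorithm \ref{alg:rhtbw}, I would use that a random Hadamard matrix $H$ is orthogonal with $H H^\top = I$ almost surely, so applying $H$ to the contracted dimension of each operand before quantization leaves the underlying bilinear form $A H H^\top B = AB$ unchanged. Since $H$ is applied \emph{before} the independent SR quantization, the independence argument above carries over, and the $16/9$ rescaled output is an unbiased estimate of $AB$. Instantiating $A, B$ as the appropriate pairs from $\{\tfrac{dL}{dy}, W, x\}$ recovers unbiased estimates of $\tfrac{dL}{dx}$ and $\tfrac{dL}{dW}$.

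The main obstacle I anticipate is the independence bookkeeping: one must be careful that the shared exponent $X$, which is a deterministic function of the input operand, does not introduce any coupling with the SR noise on the same operand (it does not, since $X$ is computed before dithering), and that the dithering noises on the two operands of a GEMM are drawn from independent streams (which is the setup assumed in the lemma). The clipping-free claim after the $3/4$ rescaling is immediate from the strict inequality $m/2^{\texttt{shared\_exp}} < 8$ noted in the text, so no additional work is needed there.
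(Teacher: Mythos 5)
Your proposal is correct and takes essentially the same route as the paper's own proof: establish no-clipping after the $3/4$ scaling so that dithered SR is unbiased per entry, factor the product $\mathbb{E}[\hat A_{ik}\hat B_{kj}]$ via independence of the dithering noise, invoke linearity of expectation over the contraction, cancel the $(HS)(HS)^\top = I$ factor by orthogonality, and rescale by $16/9$. The only minor difference is that the paper writes out the per-block scale/FP4-component decomposition of the MXFP4 GEMM explicitly and keeps track of the $\mathrm{diag}(S)$ factor alongside $H$, but these are presentation details rather than a different argument.
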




\begin{algorithm}[t]
\caption{Unbiased quantization of $V \in \texttt{HP\_DTYPE}^k$ to an MXFP4 block
$\{X, P \in \texttt{LP\_DTYPE}^k\}$}
\label{alg:sr2mx}
\begin{algorithmic}[1]
\REQUIRE $\texttt{emax}_\texttt{elem}$ = exponent of the largest normal number in \texttt{LP\_DTYPE}
\STATE \texttt{shared\_exp} $\gets \lfloor \log_2(\max_{i}(|V_i|)) \rfloor - \texttt{emax}_\texttt{elem}$
\STATE $X \gets 2^{\texttt{shared\_exp}}$
\FOR{$i=1$ to $k$}
\STATE $V_i \gets \frac{3}{4} V_i$
\STATE $P_i = \texttt{stochastic\_round\_to\_FP4}(V_i / X)$
\ENDFOR
\RETURN $X,\>\{P_i\}_{i=1}^k$
\end{algorithmic}
\end{algorithm}

\subsection{Bounding the Variance of SR with the Random Hadamard Transform}
\label{sec:smallrht}
The backward pass for a linear layer ($y = xW^T$) requires computing $\frac{dL}{dx} = \frac{dL}{dy} W$ and $\frac{dL}{dW} = \frac{dL}{dy}^T x$.
LLMs have been known to have activation ($x$) and weight ($W$) ``outliers'' as well as sparse gradients ($\frac{dL}{dy}$) \citep{int4training, qs}.
Recall that MXFP4 quantization relies on groupwise statistics such as the largest magnitude element, so blocks with outliers will suffer from high quantization distortion and stochastic rounding variance.

Although Lemma \ref{lem:srlemma} tells us that Algorithm \ref{alg:sr2mx} produces an unbiased estimate of the true GEMM, high variance estimates can still degrade model quality by effectively adding noise to the gradient estimate.
To remedy this, we use the randomized Hadamard transform to concentrate gradients, activations, and weights before quantization, which asymptotically reduces the variance of the GEMM output.

The random Hadamard transform performs $x \gets HSx$, where $x \in \mathbb{R}^{j\times k}, S \in \{\pm 1\}^k$ (a random sign vector), and $H$ is the $k$-dimensional Hadamard matrix \citep{rht}. 
Hadamard matrices are recursively defined orthogonal matrices that satisfy the following:
\begin{equation}
\label{eqn:hadrec}
H_n = \frac{1}{2^{n/2}}\begin{bmatrix}
H_{n-1} & H_{n-1} \\
H_{n-1} & -H_{n-1}
\end{bmatrix},
\end{equation}
where $H_1 = \begin{bmatrix} 1 \end{bmatrix}$. 
Since both $H$ and $diag(S)$ are orthogonal, the RHT is fully invertible.
This means that we can apply the RHT to GEMM operands without inverting the RHT -- that is, $(HSA)^T(HSB) = A^TB$.

\begin{restatable}{theorem}{rhtvar}
\label{thm:rhtvar}
Let $A$ and $B$ be two size-$b$ vectors $\in \mathbb{R}^b$, and let $\mathcal{Q}$ perform Algorithm \ref{alg:sr2mx}. Then, the variance of $\mathcal{Q}(A)^T\mathcal{Q}(B)$ is $\mathcal{O}(b\Delta^4\|A\|_\infty\|B\|_\infty)$ and the variance of $\mathcal{Q}(HSA)^T\mathcal{Q}(HSB)$ is, with probability $\ge (1-\epsilon)^2$, $\mathcal{O}(\Delta^4\|A\|\|B\|\log(2b/\epsilon))$, where the largest gap between two consecutive representable points in $\mathcal{Q}$'s quantizer is $\Delta$. 
\end{restatable}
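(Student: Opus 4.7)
The plan is to split the argument in two. First, I would derive the variance bound for the un-rotated estimator $\mathcal{Q}(A)^T\mathcal{Q}(B)$ directly from the stochastic-rounding noise decomposition. Second, I would invoke the standard $\ell_\infty$-concentration property of the random Hadamard transform and plug the resulting $\|HSA\|_\infty$ and $\|HSB\|_\infty$ bounds into the un-rotated formula to obtain the claimed $\log(2b/\epsilon)$ scaling.

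For the first part, I would write $\mathcal{Q}(A_i) = A_i + \eta_i$ and $\mathcal{Q}(B_i) = B_i + \zeta_i$, where $\eta_i,\zeta_i$ are the SR dither errors. By Lemma \ref{lem:srlemma} the noises are zero mean, and since dithering is independent across coordinates and between the two operands (which are quantized as separate MX tensors), $\{\eta_i\}$ and $\{\zeta_i\}$ are mutually independent. Each error is bounded in magnitude by the local quantizer gap, which is at most $\Delta$ once the per-block scale from Algorithm \ref{alg:sr2mx} is applied. Expanding
\[
\mathcal{Q}(A_i)\mathcal{Q}(B_i) = A_iB_i + A_i\zeta_i + B_i\eta_i + \eta_i\zeta_i,
\]
summing over $i$, and using independence to kill every cross covariance, the variance collapses to a sum of terms of the form $A_i^2\,\mathrm{Var}(\zeta_i) + B_i^2\,\mathrm{Var}(\eta_i) + \mathrm{Var}(\eta_i\zeta_i)$. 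Bounding each single-noise variance by $\Delta^2$, each product-noise variance by $\Delta^4$, and each $|A_i|,|B_i|$ by the $\ell_\infty$ norm produces the un-rotated bound $\mathcal{O}(b\Delta^4\|A\|_\infty\|B\|_\infty)$.

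For the second part, I would use the standard RHT concentration: for any fixed $v\in\mathbb{R}^b$, each coordinate of $HSv$ is a Rademacher sum with sub-Gaussian proxy $\|v\|_2/\sqrt{b}$, so Hoeffding combined with a union bound over the $b$ coordinates gives $\|HSv\|_\infty \le \|v\|_2\sqrt{2\log(2b/\epsilon)/b}$ with probability at least $1-\epsilon$. Applying this once to $A$ and once to $B$ (with the independent sign vectors used for each operand) and union-bounding produces a probability-$(1-\epsilon)^2$ event on which both rotated vectors have small $\ell_\infty$ norm. Substituting these bounds into the Part 1 formula with $A\gets HSA,\; B\gets HSB$, the prefactor $b$ cancels against the $1/b$ produced by the two concentration factors, leaving $\mathcal{O}(\Delta^4\|A\|_2\|B\|_2\log(2b/\epsilon))$. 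Unbiasedness is preserved under rotation by the orthogonality $(HS)^T(HS)=I$ combined with Lemma \ref{lem:srlemma}.

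\textbf{Main obstacle.} The subtlest step is the book-keeping around the MX block scale in Algorithm \ref{alg:sr2mx}: the per-block quantizer gap is itself a data-dependent random variable, so $\eta_i$ is only conditionally mean zero given the block scale, and the entries inside a block share that scale. I would handle this by conditioning on the block scales, pushing the variance computation inside the conditional expectation (the tower rule preserves zero mean and independence of the dither across coordinates given the scales), and then bounding the per-block gap uniformly by the worst-case $\Delta$ expressed through $\|A\|_\infty$ (and $\|HSA\|_\infty$ after rotation). A secondary subtlety is that the Hoeffding step cleanly applies to both $HSA$ and $HSB$ only when independent sign vectors are used on the two operands, which matches the implementation where the RHT is applied separately and with independent randomness to each GEMM input before quantization.
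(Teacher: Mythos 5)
Your overall strategy is the same as the paper's: bound $\mathrm{Var}(\mathcal{Q}(A)^T\mathcal{Q}(B))$ coordinate-by-coordinate using independence of the dither, then replace the $\ell_\infty$ norms by $\ell_2$ norms times $\sqrt{\log(2b/\epsilon)/b}$ via the RHT tail bound and a union bound. Your additive-noise decomposition $\mathcal{Q}(A_i)=A_i+\eta_i$ is algebraically equivalent to the paper's direct use of $\mathrm{Var}(XY)=\mathrm{Var}(X)\mathrm{Var}(Y)+\mathrm{Var}(X)\mathbb{E}[Y]^2+\mathrm{Var}(Y)\mathbb{E}[X]^2$ on the post-scaling FP4 values, so Part 1 is the same argument in different clothing; the paper additionally computes $\mathrm{Var}(Q_{A,i})=(c(\alpha)-\alpha)(\alpha-f(\alpha))$ exactly rather than just bounding it, but both land on $\mathcal{O}(\Delta^2)$ per coordinate.

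Two of the ``subtleties'' you flag are off, however. First, the block scale $X$ in Algorithm \ref{alg:sr2mx} is a \emph{deterministic} function of the input block (it is $\lfloor\log_2\max_i|V_i|\rfloor-\texttt{emax}_{\texttt{elem}}$, with no randomness), so there is no need to condition on it: given the input, the only source of randomness is the dither, and $\eta_i$ is unconditionally mean zero. Second, and more importantly, the claim that ``the implementation uses independent sign vectors on the two operands'' is incorrect and would actually break the algorithm. Algorithm \ref{alg:rhtbw} uses the \emph{same} $S$ and $H$ on both GEMM operands --- this is exactly what makes $(HSA)^T(HSB)=A^TB$ hold, and with independent sign vectors the GEMM would no longer compute the right quantity. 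Because $S$ is shared, the two events $\{\|HSA\|_\infty \text{ small}\}$ and $\{\|HSB\|_\infty \text{ small}\}$ are not independent; the union bound you'd actually invoke gives probability $\ge 1-2\epsilon$, and a simple union bound does not ``produce a probability-$(1-\epsilon)^2$ event'' as you write (note $1-2\epsilon<(1-\epsilon)^2$). This is a small constant-factor issue in the probability, not a qualitative gap, but it should be stated correctly.

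Finally, the per-coordinate bookkeeping needs a bit more care than the sentence ``bounding each single-noise variance by $\Delta^2$ \ldots and each $|A_i|,|B_i|$ by the $\ell_\infty$ norm produces $\mathcal{O}(b\Delta^4\|A\|_\infty\|B\|_\infty)$'' suggests. Plugging your stated per-term bounds in directly gives $\sum_i(A_i^2\mathrm{Var}(\zeta_i)+B_i^2\mathrm{Var}(\eta_i)+\mathrm{Var}(\eta_i\zeta_i))=\mathcal{O}(b\Delta^2\|A\|_\infty^2+b\Delta^2\|B\|_\infty^2+b\Delta^4)$, which is not the stated form. To recover the theorem's expression you need to keep track, as the paper does, of how the block scale $X=\Theta(\|\cdot\|_\infty)$ enters each term (the quantizer gap in original units is $\Theta(\Delta\cdot X)$), so that the $\|A\|_\infty,\|B\|_\infty$ factors arise from the scales rather than from bounding $|A_i|$. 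Spelling that out is the main bit of writing left to do; the underlying idea is sound and matches the paper's proof.
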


Theorem \ref{thm:rhtvar} tells us that the variance of a MX matrix multiplication with respect to stochastic rounding is linear in the product of the largest magnitude elements in the operands. 
Applying the RHT to a vector effectively concentrates it to have a sub-Gaussian tail distribution. 
From \citet{qs}, we know that
\begin{equation}
\label{eqn:hadtail}
\mathbb{P}\left( |e_iHSx| \ge a\right) \le 2\exp\left(\frac{-a^2k}{2\|x\|^2}\right),
\end{equation}

letting us bound the the variance of the SR GEMM in Theorem \ref{thm:rhtvar}. 
Specifically, applying the RHT reduces the variance from a linear dependence on blocksize to a log-dependence on blocksize, albeit with the $L_2$ norm of the input instead of the $L_\infty$ norm.

We can verify this empirically by measuring the variance of a SR GEMM with and without the RHT. 
Figure \ref{fig:rhtvar} shows the mean variance of $\mathcal{Q}(A)^T\mathcal{Q}(B)$ vs. $\mathcal{Q}(HSA)^T\mathcal{Q}(HSB)$ over 4K samples of $A, B \in \mathbb{R}^{b} \sim \mathcal{N}(0, I)$ with proportion $p$ outliers from $\mathcal{N}(0, 5I)$, where $\mathcal{Q}$ performs Algorithm \ref{alg:sr2mx}.
That is, $A, B \sim \mathcal{N}(0, I) + \mbox{Bernoulli}(p)*\mathcal{N}(0, 5I)$.
As expected from Theorem \ref{thm:rhtvar}, the variance grows much slower as a function of $b$ with the RHT vs. without.

\begin{figure}[t]
\includegraphics[width=\linewidth]{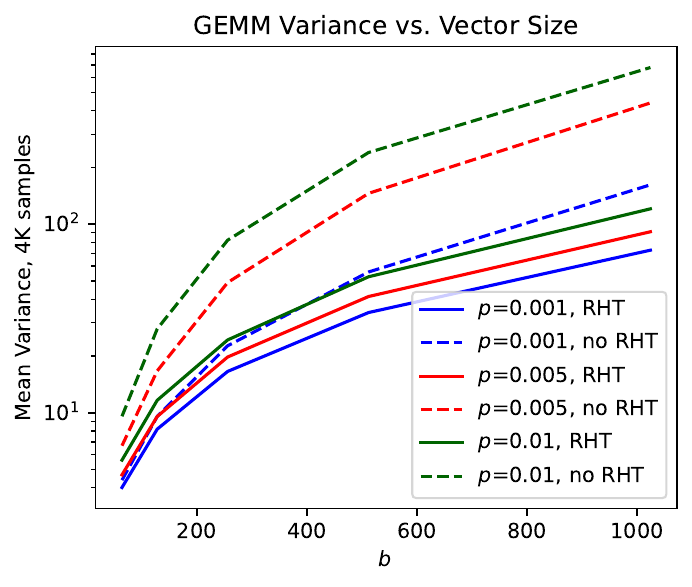}
\caption{Mean variance of $\mathcal{Q}(A)^T\mathcal{Q}(B)$ vs. $\mathcal{Q}(HSA)^T\mathcal{Q}(HSB)$ over 4K samples of $A, B \in \mathbb{R}^{b} \sim \mathcal{N}(0, I)$ with proportion $p$ outliers from $\mathcal{N}(0, 5I)$. $\mathcal{Q}$ performs Algorithm \ref{alg:sr2mx}. Variance with the RHT grows much slower than without.
}
\label{fig:rhtvar}
\end{figure}

\begin{algorithm}[t]
\caption{MXFP4 linear layer (no bias) backward pass with the random Hadamard transform.}
\label{alg:rhtbw}
\begin{algorithmic}[1]
\REQUIRE Gradient of output $\frac{dL}{dy} \in \mathbb{R}^{b\times m}$, activations $x \in \mathbb{R}^{b \times n}$, weights $W \in \mathbb{R}^{m \times n}$, block size $g \le 256, 32|g, g|m, g|n$.
\STATE $H \gets \text{Hadamard matrix } H_b \in \mathbb{R}^{m \times m}$.
\STATE Sample random sign vector $S \in \{\pm1\}^b$.
\STATE $G' \gets \left(\left(\frac{dL}{dy}\right).\texttt{view}\left(\frac{bm}{g}, g\right)\right)\texttt{diag}(S)H$
\STATE $W' \gets H^T \texttt{diag}(S) \left(W.\texttt{view}\left(g, \frac{nm}{g}\right)\right)$
\STATE $GT' \gets \left(\left(\frac{dL}{dy}^T\right).\texttt{view}\left(\frac{bm}{g}, g\right)\right)\texttt{diag}(S)H$
\STATE $X' \gets H^T \texttt{diag}(S)\left(x.\texttt{view}\left(\frac{bn}{g}, g\right)\right)$
\STATE $\frac{dL}{dx} \gets \texttt{MXFP4\_GEMM}(G', W')$
\STATE $\frac{dL}{dW} \gets \texttt{MXFP4\_GEMM}(GT', X')$
\\\COMMENT{Where \texttt{MXFP4\_GEMM} forms MX groups along the reduction dimension and uses either Algorithm \ref{alg:float2mx} or \ref{alg:sr2mx} to quantize to MXFP4.}
\IF{Using Algorithm \ref{alg:sr2mx}}
\STATE $\frac{dL}{dx} \gets \frac{16}{9} \frac{dL}{dx}$
\STATE $\frac{dL}{dW} \gets \frac{16}{9} \frac{dL}{dW}$
\ENDIF
\RETURN $\frac{dL}{dx}, \frac{dL}{dW}$
\end{algorithmic}
\end{algorithm}

However, the RHT is not free.
First, observe that when computing $\frac{dL}{dW} \approx \mathcal{Q}(HS\frac{dL}{dy})^T\mathcal{Q}(HSx)$, the RHT ``mixes'' along the batch dimension.
In data-parallel settings (e.g. FSDP \citep{fsdp} or ZeRO-3 \citep{zero}) where activations are sharded across GPUs, the full RHT would require expensive cross-GPU communication.
Even with fast interconnects, this would immediately bottleneck gradient computation.
Second, although Equation \ref{eqn:hadrec} admits an $O(n \log n)$ time matrix-vector product algorithm, the RHT step occurs in high precision.
Reducing this overhead is critical -- if the RHT is slower than a FP4 matmul, one should just use FP8 instead.

To solve these problems, we apply the RHT as a dense matrix multiplication over a small number of MX blocks, which makes it \textit{memory bound} in the GEMM operands (see Table \ref{tab:e2e}).
Specifically, let the RHT block size be $g, 32 | g$. 
Applying this block-wise RHT as a dense matmul gives a runtime of $O((b+m)ng)$ and IO cost of $O(bn+nm+bm)$.
Since modern AI accelerators have high compute to memory ratios, this ``blockwise'' RHT is memory bound when $g \lessapprox 256$. 
Algorithm \ref{alg:rhtbw} summarizes how we use the RHT in the backward pass of a linear layer. 
Since $g$ is smaller than the sequence length of any reasonably large model, Algorithm \ref{alg:rhtbw} works as a drop-in replacement for a linear layer even in data-parallel settings.
Furthermore, although lines 3-6 are written out for clarity, an efficient implementation could fuse them into lines 7 and 8, reducing costly memory accesses. 

The tradeoff to doing this blockwise RHT is that equation $\ref{eqn:hadtail}$ depends on $g$ ($k$ in the equation) -- the higher $g$ is, the tighter the concentration will be. 
However, in practice, we observe $g = 64$ is sufficient to get a tight distribution and MX can handle scale differences across blocks.
Finally, note that this construction also lets us use \textit{any} random orthogonal transformation. 
We chose the RHT since it is fast to randomize (by sampling a single $g$-dim sign vector) and has good concentration, but other matrices could work as well.

\section{Experiments}
\label{sec:experiments}

Our main experiments focus on pretraining GPT 345M, 1.3B, and 6.7B \citep{gpt}.
We follow prior low precision training works and train for at least 20 billion tokens, which is sufficient to determine overall training performance on a longer full-scale run \citep{msamp}.
We use the Megatron-LM codebase to train our models \citep{megatronpaper}, the publicly available GPT2 Wikipedia dataset \citep{gpt2wiki}, and the bit-accurate Microsoft \texttt{microxcaling} library for MX emulation \citep{microxcaling}.
Since pretraining is expensive, we stopped certain experiments short when it was clear they did not match BF16.
Our analysis below uses validation perplexity from a holdout set, but we observe the same behavior with training perplexity. 
Training perplexity plots and additional experiments can be found in the Appendix.

\subsection{GPT Pretraining Results}

\begin{table}[t]
\caption{Final losses for GPT models trained on the GPT2 Wikipedia corpus. All models were trained with BF16 mixed precision for the forward pass.}
\label{tab:bf16fw}
\centering
\renewcommand{\tabcolsep}{5pt}
\begin{tabular}{@{}ccccc@{}}
\toprule
Params. & Toks. & Bwd. Prec.   & \begin{tabular}[c]{@{}c@{}}Train.\\ Loss\end{tabular} & \begin{tabular}[c]{@{}c@{}}Val.\\ Loss\end{tabular} \\ \midrule
\rowcolor[HTML]{EFEFEF} 
345M    & 33B   & BF16         & 2.58                                                  & 2.49                                                \\
\rowcolor[HTML]{EFEFEF} 
345M    & 33B   & MXFP4        & 2.73                                                  & 2.60                                                \\
\rowcolor[HTML]{EFEFEF} 
345M    & 33B   & MXFP4+RHT    & 2.60                                                  & 2.51                                                \\
\rowcolor[HTML]{EFEFEF} 
345M    & 33B   & MXFP4+RHT+SR & 2.60                                                  & 2.51                                                \\
1.3B    & 42B   & BF16         & 2.28                                                  & 2.32                                                \\
1.3B    & 42B   & MXFP4        & 2.44                                                  & 2.40                                                \\
1.3B    & 42B   & MXFP4+RHT    & 2.30                                                  & 2.33                                                \\
1.3B    & 42B   & MXFP4+RHT+SR & 2.29                                                  & 2.32                                                \\
1.3B    & 42B   & MXFP4+SR     & 2.29                                                  & 2.32                                                \\
\rowcolor[HTML]{EFEFEF} 
1.3B    & 210B  & BF16         & 2.06                                                  & 2.29                                                \\
\rowcolor[HTML]{EFEFEF} 
1.3B    & 210B  & MXFP4+RHT    & 2.09                                                  & 2.31                                                \\
\rowcolor[HTML]{EFEFEF} 
1.3B    & 210B  & MXFP4+RHT+SR & 2.07                                                  & 2.29                                                \\
\rowcolor[HTML]{EFEFEF} 
1.3B    & 210B  & MXFP4+SR     & 2.08                                                  & 2.29                                                \\
6.7B    & 21B   & BF16         & 2.04                                                  & 2.27                                                \\
6.7B    & 21B   & MXFP4+RHT    & 2.05                                                  & 2.28                                                \\
6.7B    & 21B   & MXFP4+RHT+SR & 2.08                                                  & 2.27                                                \\ \bottomrule
\end{tabular}
\end{table}


\begin{figure}[t]
\includegraphics[width=\linewidth]{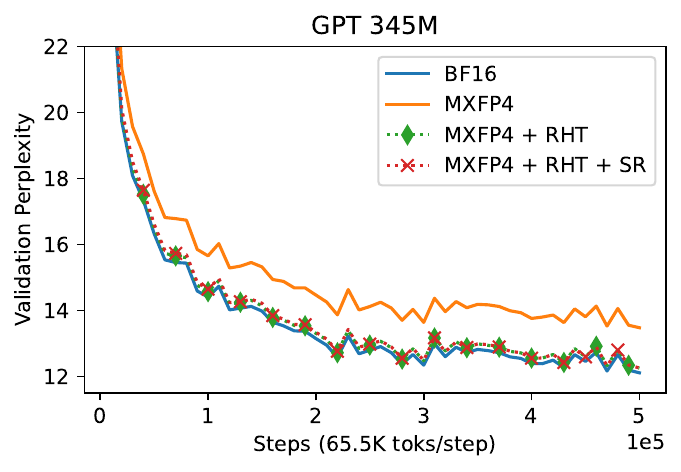}
\caption{GPT 345M validation perplexity curves with BF16 forward pass. With RHT and SR, MXFP4 can match the performance of BF16 in the backward pass.}
\label{fig:345mbf16}
\end{figure}

\begin{figure}[t]
\includegraphics[width=\linewidth]{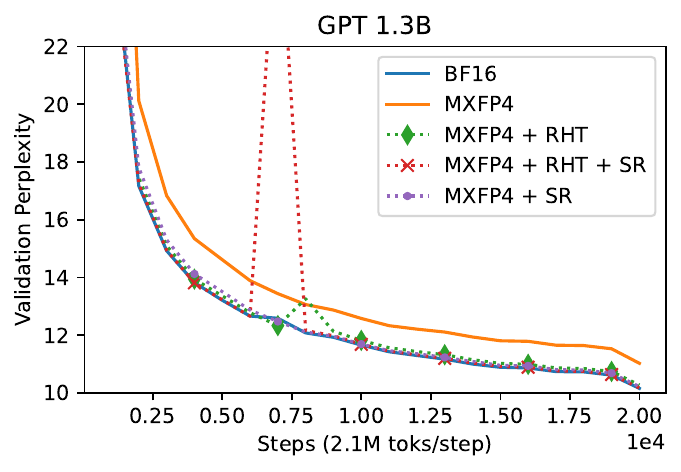}
\caption{GPT 1.3B validation perplexity curves with BF16 forward pass. With RHT and SR, MXFP4 can match the performance of BF16 in the backward pass.}
\label{fig:1b3bf16}
\end{figure}


\begin{figure}[t]
\includegraphics[width=\linewidth]{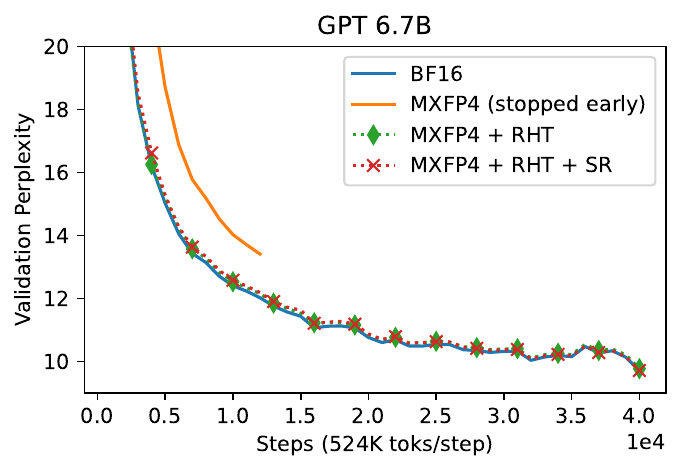}
\caption{GPT 6.7B validation perplexity curves with BF16 forward pass. With RHT and SR, MXFP4 can match the performance of BF16 in the backward pass. The MXFP4-only run was stopped early to save resources.}
\label{fig:6b7bf16}
\end{figure}

\begin{figure*}[t]
\includegraphics[width=\linewidth]{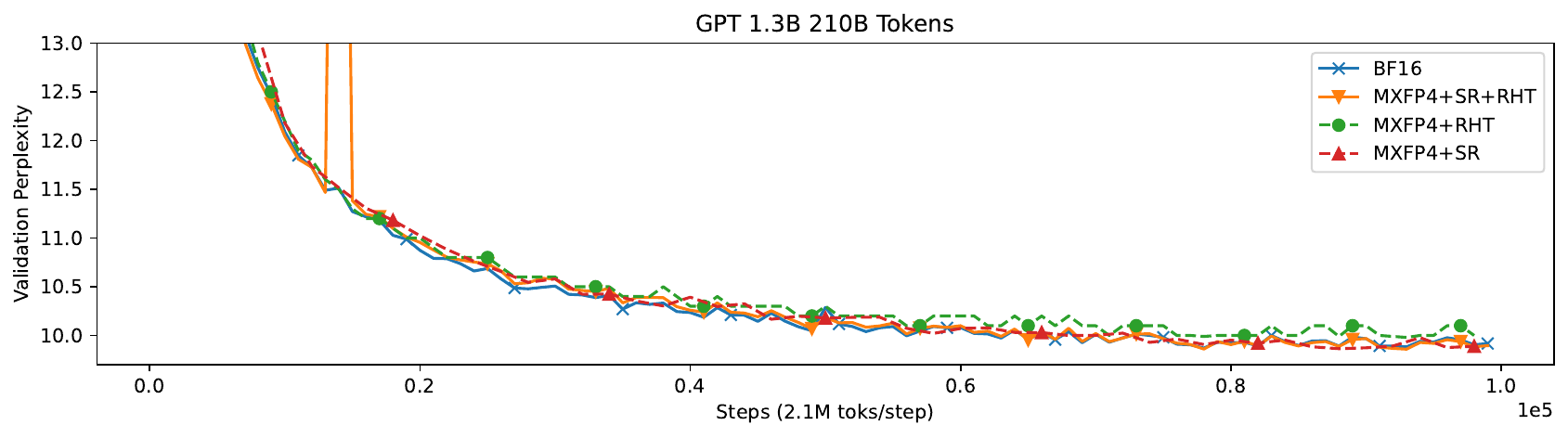}
\caption{Validation perplexity for training GPT 1.3B for 210 billion tokens, or $5\times$ longer than in Table \ref{tab:bf16fw}. All experiments used BF16 in the forward pass and the specified backward precision.}
\label{fig:200b}
\end{figure*}

\begin{table}[t]
\centering
\renewcommand{\tabcolsep}{3pt}
\begin{tabular}{@{}cccccc@{}}
\toprule
Model          & ArcC & ArcE & PiQA & BoolQ & Wino \\ \midrule
BF16  & 23.1 & 49.2 & 60.5 & 53.3  & 52.0       \\
MXFP4$\star$ & 22.2 & 47.8 & 61.3 & 59.6  & 49.6       \\ \midrule
BF16 \small{\textsc{Tulu V2}} & 25.6 & 50.6 & 62.7 & 59.6 & 51.6 \\
MXFP4$\star$ \small{\textsc{Tulu V2}} & 25.9 & 49.9 & 62.9 & 60.5 & 51.8 \\ \bottomrule
\end{tabular}
\caption{GPT 6.7B model trained on 20B tokens before and after Tulu V2 fine-tuning. Both BF16 and our MXFP4+RHT+SR (MXFP4$\star$) model exhibit similar performance before and after fine-tuning. 
}
\label{tab:finetune}
\end{table}

\begin{table}[t]
\centering
\begin{tabular}{@{}cccccc@{}}
\toprule
BW Pass  & BF16  & g=32  & g=64  & g=128 & g=256 \\ \midrule
Val. PPL & 11.89 & 12.02 & 12.01 & 11.98 & 11.98 \\ \bottomrule
\end{tabular}
\caption{Validation perplexity for training GPT 345M on 33B tokens with various RHT blocksizes. Increasing the RHT block size improves performance by reducing the variance of stochastic rounding.}
\label{tab:rhtbs}
\end{table}

\begin{table*}[t]
\centering
\begin{tabular}{@{}ccccccccc@{}}
\toprule
BW Pass   & FP16  & \begin{tabular}[c]{@{}c@{}}INT8 \\ \small{\textsc{no RHT}}\end{tabular} & \begin{tabular}[c]{@{}c@{}}INT4 \\ \small{\textsc{no RHT}}\end{tabular} & \begin{tabular}[c]{@{}c@{}}+ RHT\\ \small{\textsc{g=64}}\end{tabular} & \begin{tabular}[c]{@{}c@{}}+ RHT\\ \small{\textsc{g=128}}\end{tabular} & \begin{tabular}[c]{@{}c@{}}+ RHT\\ \small{\textsc{g=256}}\end{tabular} & \begin{tabular}[c]{@{}c@{}}+ RHT\\ \small{\textsc{g=1024 dense}}\end{tabular} & \begin{tabular}[c]{@{}c@{}}+ RHT\\ \small{\textsc{g=1024} $\mathcal{O}(n \log n)$}\end{tabular} \\ \midrule
E2E tok/s & 46983 & 55469                                                                   & 67306                                                                   & 64335                                                                 & 64171                                                                  & 63979                                                                  & 61186                                                                         & 62640                                                                                           \\
BW tok/s  & 72563 & 94688                                                                   & 133952                                                                  & 123056                                                                & 122734                                                                 & 121823                                                                 & 112299                                                                        & 120495                                                                                          \\ \bottomrule
\end{tabular}
\caption{Throughput for a FP16 forward pass and specified backward pass of a Llama 2 70B decoder layer. Measured on a NVIDIA A100; see Section \ref{sec:overhead} for more details. Since the A100 can perform INT4 GEMMs $4\times$ faster than FP16 GEMMs, 
these numbers represent the expected speedup of MXFP4 on supported hardware.}
\label{tab:e2e}
\end{table*}

Table \ref{tab:bf16fw} and Figure \ref{fig:200b} show our main results with using BF16 in the forward pass and various MXFP4 constructions in the backward pass.
We ablate on using the RHT only, which produces a biased but reduced-distortion GEMM, and RHT and SR, which produces an unbiased, lower varianace GEMM.
For GPT 1.3B, we also measure the performance of MXFP4+SR only, which gives an unbiased but higher variance GEMM.
All experiments use Megatron-LM's mixed precision implementation with separate FP32 master weights and BF16 parameter copies.
For the backward pass for decoder linear layers, the BF16 and gradients are quantized to MXFP4.
Experiments with the RHT use $g=64$, which mixes across 2 MX blocks. 

Table \ref{tab:bf16fw} shows that for shorter runs (20-40 billion tokens), using either the RHT or SR with MXFP4 is sufficient to achieve near-lossless training all tested model sizes.
However, Figure \ref{fig:200b} shows that for longer runs (210 billion tokens), having an unbiased gradient estimator is necessary to maintain performance.
Whereas using the RHT only results in an $\approx 0.1$ perplexity gap, using stochastic rounding (with or without the RHT) results in \textit{no validation perplexity gap}.

Figures \ref{fig:345mbf16}, \ref{fig:1b3bf16} and \ref{fig:6b7bf16} show the validation perplexity curves for the experiments in Table \ref{tab:bf16fw}.
At all scales, the MXFP4+RHT+SR curve closely tracks BF16. 
In contrast, although the final performance of MXFP4+SR matches MXFP4+SR+RHT, MXFP4+SR exhibits slower initial convergence than BF16 and MXFP4+RHT+SR. 
We suspect that this is due to loss of gradient information without the RHT. 
Although using only SR will give an unbiased gradient estimator, small values will still get stochastically flushed to 0 (Equation \ref{eqn:unifdither}), resulting in loss of gradient information.
In contrast, the RHT transforms the gradient to a different space.
This reduces variance and also significantly reduces the probability that a single gradient entry in the original space will be set to 0.
To verify this, Table \ref{tab:rhtbs} shows an ablation on the RHT block size -- increasing the block size improves quality.

These figures also include curves for using pure MXFP4 (no RHT and no SR) MP in the backward pass.
Using only MXFP4 (the orange curve) results in significant degradation and a large perplexity gap at all sizes.
Even further, if we consider that FP8 is near-lossless \citep{msamp} vs. BF16 and is only an estimated 30-40\% slower end-to-end than pure MXFP4, then pure MXFP4 isn't even ``worth it.''
For a fixed amount of wall clock time, simply training with FP8 for fewer steps would give a better model than using pure MXFP4.
In contrast, our techniques close the gap to BF16 and FP8, making MXFP4 practical for training.
Our techniques are also compatible with FP8 forward passes (Figure \ref{fig:fp8fw}, more details in Appendix), further pushing the speed-quality tradeoff curve.

To further evaluate our MXFP4 models, we ran zeroshot evaluation for downstream tasks on our 20B token GPT 6.7B models. 
Both the BF16 and MXFP4+RHT+SR models perform around the same.
To test how well these models can be fine-tuned, we fine-tuned them using the publicly-available Tulu V2 dataset (657M tokens) and codebase \citep{tulu2}. 
We used the hyperparameters in the Tulu V2 codebase and trained for 5 epochs with BF16/FP32 mixed precision. 
The BF16 model reached a final training perplexity of 1.96, and the MXFP4+RHT+SR model 1.98.
Like before finetuning, both models achieve similar zeroshot performance, indicating that they are of similar quality.
Table \ref{tab:finetune} summarizes these results.

\begin{figure*}[t]
\centering
\includegraphics[width=0.4\linewidth]{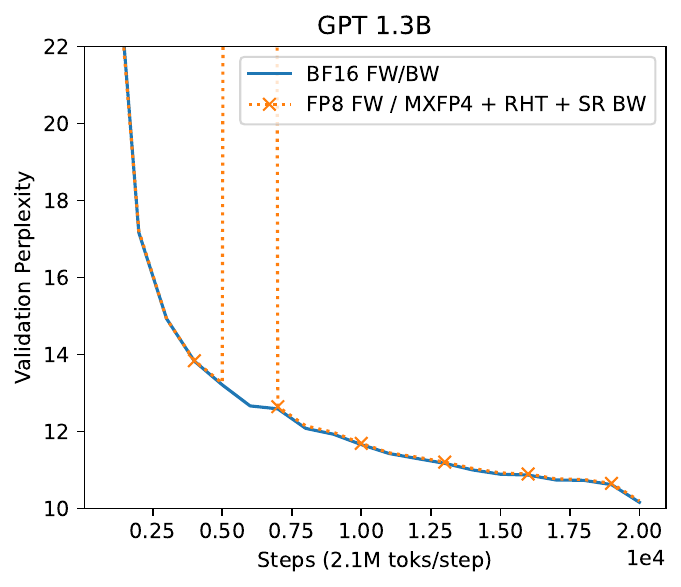}
\hspace{1cm}
\includegraphics[width=0.4\linewidth]{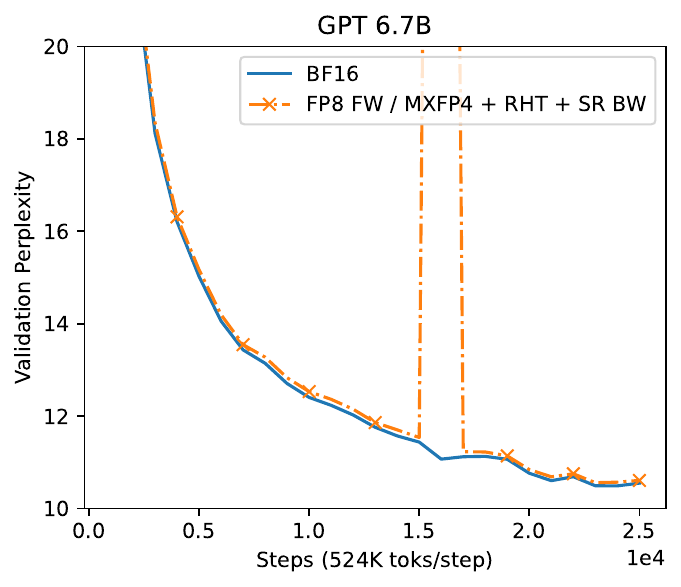}
\caption{GPT 1.3B and 6.7B perplexity curves with a FP8 forward pass, our MXFP4 backward pass, and the same settings as Figures \ref{fig:1b3bf16} and \ref{fig:6b7bf16}. Our method is compatible with FP8 forward passes for additional speedups. See Appendix for details.
}
\label{fig:fp8fw}
\end{figure*}

\subsection{Overhead Calculations}
\label{sec:overhead}

The goal of MXFP4 training is to achieve a wall-clock time speedup over FP8 training. 
Unfortunately, we do not have access to FP4 hardware yet so we cannot measure empirical wall-clock speedups over FP8.
However, we can estimate the overhead of the RHT and stochastic rounding with proxy benchmarks.

Our RHT construction operates on a small ``tile'' in the operand and is memory bound, so we can conceivably fuse it with the MXFP4 GEMM and avoid writing its output to memory.
We can estimate the performance of this setup in two ways.
First, we measured the overhead of RHT-GEMM kernels for \textit{FP8}.
Specifically, we timed $A\texttt{.to(E4M3)}B\texttt{.to(E4M3)}^T$, $A \in \texttt{BF16}^{n \times k}, B \in \texttt{BF16}^{m \times k}$ with and without the RHT along the $k$ dimension.
We generated Triton \citep{triton} kernels with \texttt{torch.compile} \citep{torch}, an RHT size of $g=64$, and benchmarked 7B and 70B-sized matrices: $(m,n,k)=(32768, 8192, 8192)$ and $(16384, 28672, 28672)$.
On a NVIDIA H100 GPU, the RHT adds 9.7\% overhead for the 7B-sized setup and 1.6\% for the 70B-sized setup.
Assuming MXFP4 has twice the throughput of FP8, these numbers would double to $19.4\%$ and $3.2\%$, respectively, which is still faster than a FP8 GEMM.

Second, we measured the overhead of the RHT on the HuggingFace implementation \citep{hf} of single Llama 2 70B decoder layer decoder layer on a NVIDIA A100 GPU.
Specifically, we report the end-to-end tokens per second for computing the forward pass in FP16 and backward pass in either FP16 or INT4, which has the same hardware speedup ($4\times$) on the NVIDIA A100 vs. FP16 as MXFP4 has on modern hardware.
We also include INT8 as a proxy for the expected speedup of a FP8 backward pass.
We use a batch size of 4 sequences with 4K tokens each (16K tokens/batch), Flash Attention 2 \citep{fa2}, \texttt{torch.compile}, and the CUTLASS INT4 and INT8 GEMM kernels. 
We were unable to use CUDA graphs since the HuggingFace implementation is not compatible with CUDA graphs; we expect CUDA graphs to improve speedup ratios by masking kernel launch overhead.

Table \ref{tab:e2e} summarizes these results.
End to end with a FP16 FW pass, an INT4+RHT backward pass is over $40\%$ faster than a FP16 backward pass and over $20\%$ faster than an INT8 backward pass.
If we only consider the backward pass, INT4+RHT is $\approx 70\%$ faster than a FP16 backward pass and $\approx 30\%$ faster than a INT8 backward pass.
The HuggingFace Llama implementation is not known to be fast, so a more efficient implementation would achieve better INT4 and INT8 speedups over FP16.
Table \ref{tab:e2e} also shows that the RHT adds less than $5\%$ E2E overhead and is memory bound in the operands until $g \approx 256$. 
Interestingly, the recently released $\mathcal{O}(n \log n)$ HadaCore kernel \citep{hadacore} recovers most of the dense GEMM penalty at $g=1024$, but is still slower than smaller $g$.

To measure the overhead of stochastic rounding, we used an Amazon Trainium 1 chip (EC2 \textit{Trn1} instance), which is one of the few widely available chips that has dedicated stochastic rounding hardware \citep{trainium}.
Our experiments show that for most matrix sizes, using SR to quantize GEMM operands from FP32 to BF16 adds less than 2\% overhead over the BF16 GEMM itself.
Assuming a 4$\times$ increase in GEMM throughput when going from BF16 to FP4, this would mean SR adds less than 10\% overhead.

\section{Conclusion}

While hardware support for low precision datatypes continues to advance, it is becoming increasingly difficult to train with these datatypes without suffering from significant model degradation.
In this work, we demonstrate the first MXFP4 training recipe that achieves near-lossless model quality vs. FP32/BF16 mixed precision training.
Our method hinges on computing low variance, unbiased gradient estimates for decoder linear layer, which enables us to make more accurate model updates.
To do this, we propose using stochastic rounding (SR) and the random Hadamard transform (RHT). 
Stochastic rounding produces unbiased gradient estimates, and the RHT reduces the variance of SR and the chance of losing gradient information from underflow.
Our experiments pretraining GPT models up to 6.7B show that both the RHT and SR are crucial for near-lossless MXFP4 training.
Finally, our benchmarks show that our method can be implemented with minimal overhead, giving an estimated 30\% speedup over FP8 and 70\% speedup over BF16 in the backward pass.

\section*{Acknowledgements}

We thank Chris De Sa for valuable feedback. We also thank Yida Wang and George Karypis for their support within AWS AI Research.
\vfill
\pagebreak
\pagebreak

\bibliographystyle{plainnat}
\bibliography{sample_paper.bib}

\clearpage

\thispagestyle{empty}
\onecolumn
\aistatstitle{Training LLMs with MXFP4: \\
Supplementary Materials}

\section{Additional Results}

\subsection{FP8 Forward Pass Results}

This section contains experiments using mixed-precision FP8 in the forward pass and MXFP4 in the backward pass.
Prior works have shown that mixed-precision FP8 forward and backward passes can be close to lossless over mixed-precision BF16 training \citep{msamp, nvte}.
To test if MXFP4 backward passes are still practical with FP8 forward passes, we trained GPT 1.3B and 6.7B models with NVIDIA's TransformerEngine (TE) FP8 (E4M3) implementation \citep{nvte} in the forward pass and our MXFP4 formulation in the backward pass. 
We did not test GPT 345M since TE FP8 already had a $>0.1$ validation perplexity gap vs. BF16 in our experiments.
For GPT 6.7B, since we did not have access to FP8-capable hardware with fast interconnects necessary for tensor-parallel training, we emulated FP8 matrix multiplications by dequantizing FP8 GEMM operands into BF16 and performing a BF16 GEMM.
While this is not bit-accurate vs. a FP8 GEMM, the relative error of the output is $\approx 0.3\%$ for random Gaussian inputs.
Furthermore, this is essentially how PyTorch emulates FP8 GEMMs \citep{torch}.
At both model scales, we find that FP8 forward passes and MXFP4 backward passes are sufficient to essentially match BF16 training.

\FloatBarrier
\begin{figure}[h!]
\centering
\includegraphics[width=0.48\linewidth]{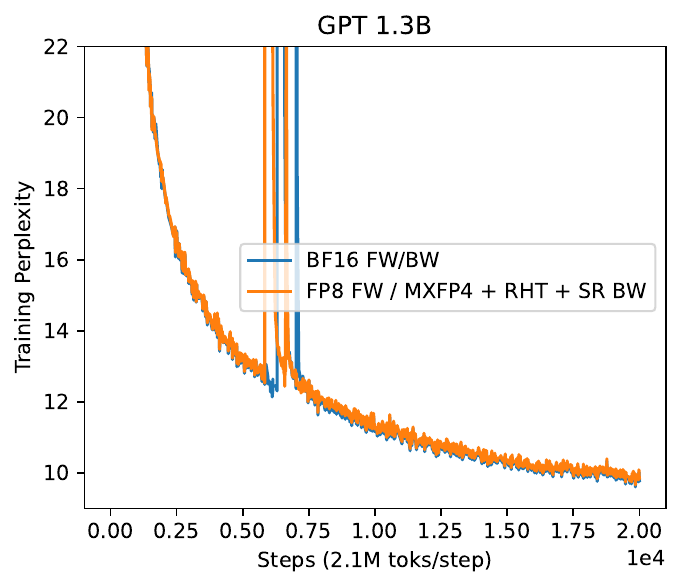}
\caption{Training perplexity curves for GPT 1.3B for 33 billion tokens. Using FP8 in the forward pass and MXFP4 in the backward pass does not result in noticeable degradation.}
\end{figure}

\begin{figure}[h!]
\centering
\includegraphics[width=0.48\linewidth]{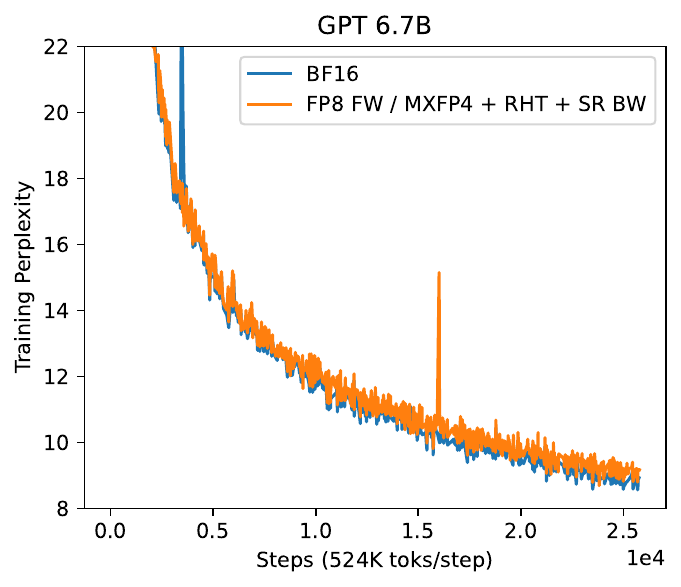}
\caption{Training perplexity curves for GPT 6.7B for the first 13 billion tokens of a 20 billion token run. Due to time constraints and the cost of training a 6.7B parameter model, we were unable to include the full run. Like 1.3B, using FP8 in the forward pass and MXFP4 in the backward pass does not result in noticeable degradation.}
\end{figure}
\FloatBarrier

\subsection{GPT 345M Validation Curves with Stochastic Rounding Only}

This plot is the same as those from the main body except that it includes an experiment with stochastic rounding only (no RHT). Like 1.3B, SR starts off ``worse'' than the RHT variants but is able to match their performance at the end of the training run. 


\begin{figure}[h!]
\includegraphics[width=\linewidth]{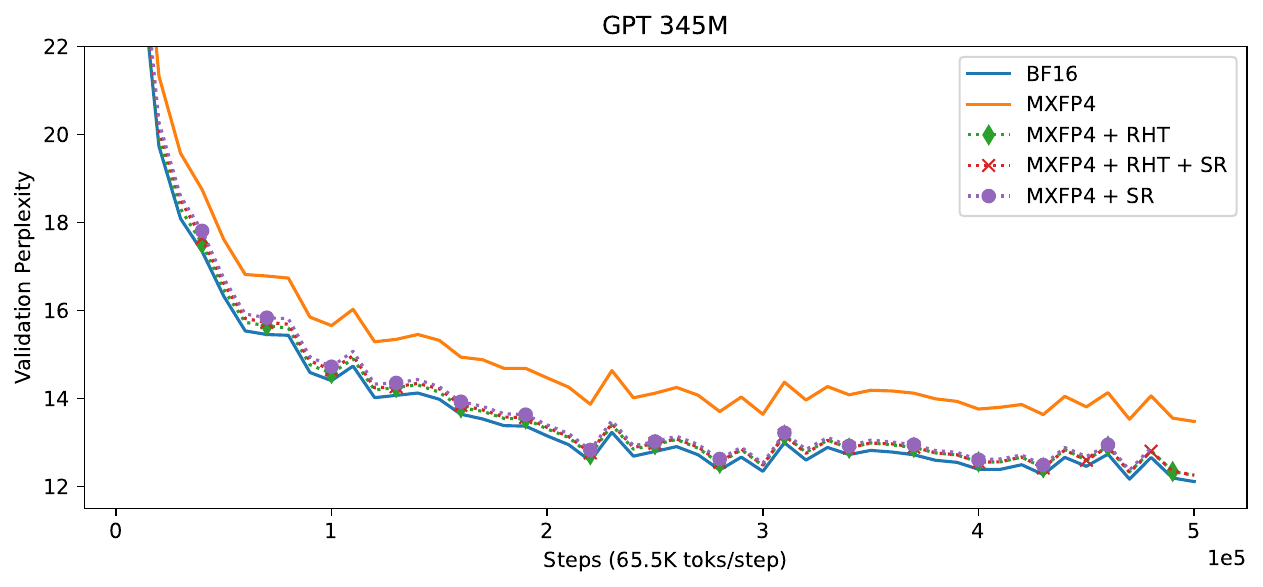}
\caption{Validation perplexity for training GPT 345M for 33 billion tokens. All experiments used BF16 in the forward pass and the specified backward precision. This plot is the same as Figure \ref{fig:345mbf16} in the main body except that it adds an experiment with MXFP4+SR only. }
\end{figure}

\FloatBarrier
\subsection{Training Curve for GPT 1.3B}
\FloatBarrier
\begin{figure}[h!]
\includegraphics[width=\linewidth]{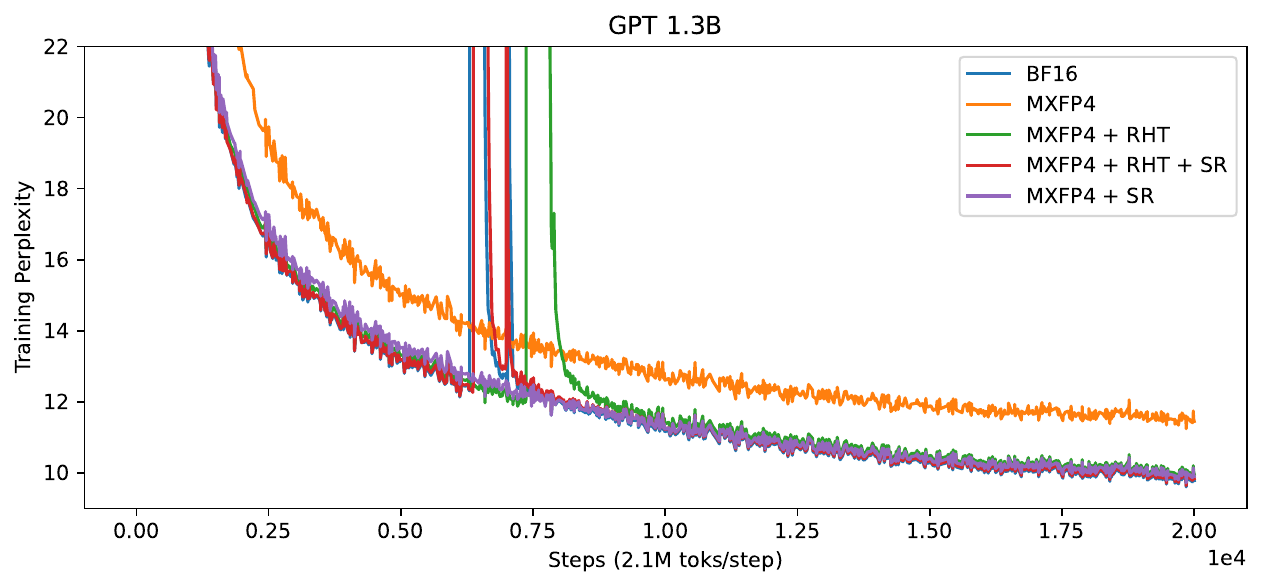}
\caption{Training perplexity for training GPT 1.3B for 40 billion tokens. All curves used BF16 in the forward pass and the specified backward precision.}
\end{figure}
\FloatBarrier

\subsection{Training Perplexity for GPT 1.3B on 200 Billion Tokens}

This section contains the full 210B token GPT 1.3B run referenced in Section \ref{sec:experiments} of the main body.
There is an approximately 0.1 validation perplexity gap between MXFP4+RHT only (10.02 ppl) and BF16 (9.92 ppl), whereas MXFP4+RHT+SR matches BF16 (9.90 ppl). 
This suggests that stochastic rounding is important for near-lossless full-scale FP4 training.

\begin{figure}[h!]
\includegraphics[width=\linewidth]{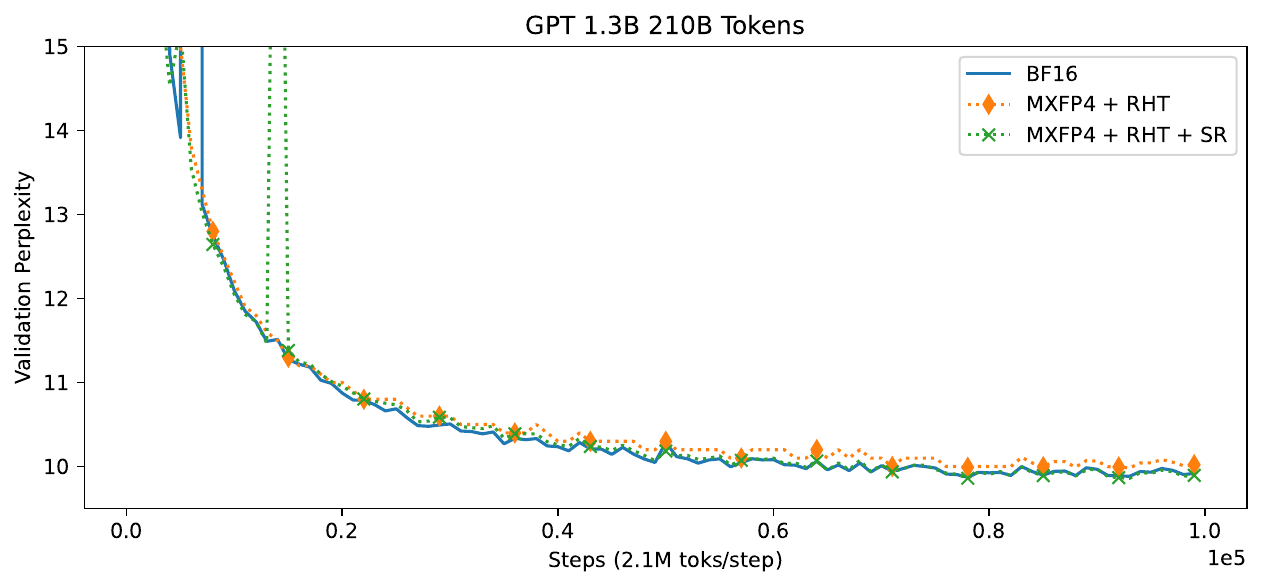}
\caption{Validation perplexity for training GPT 1.3B for 210 billion tokens. All experiments used BF16 in the forward pass and the specified backward precision.}
\end{figure}

\begin{figure}[h!]
\includegraphics[width=\linewidth]{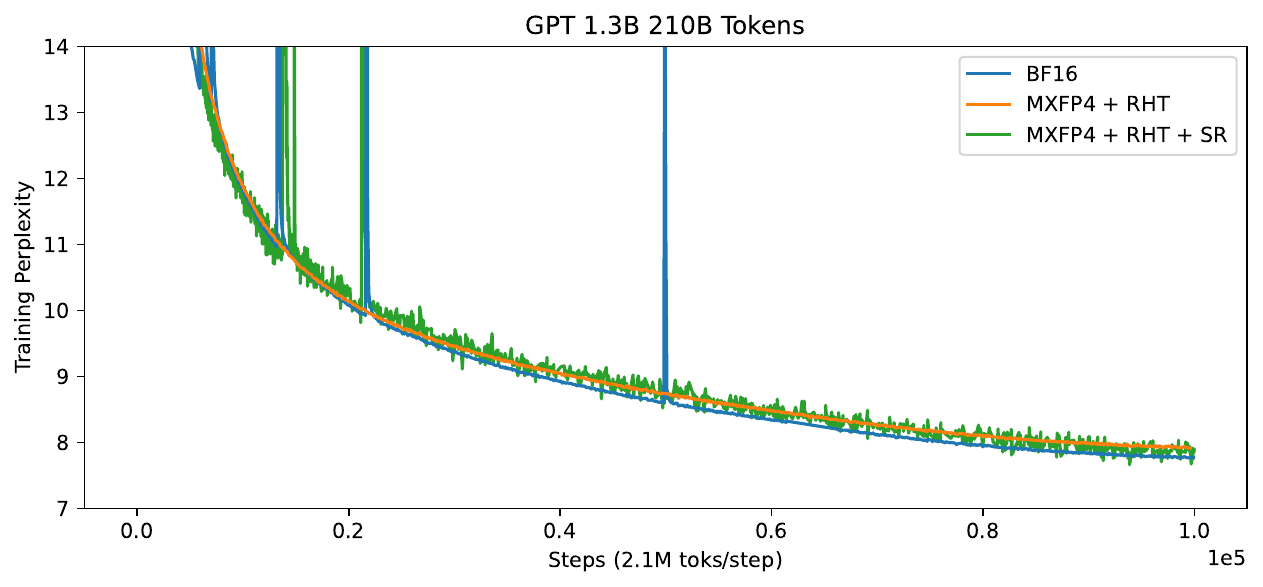}
\caption{Training perplexity for training GPT 1.3B for 210 billion tokens. All experiments used BF16 in the forward pass and the specified backward precision. The BF16 and MXFP4+RHT curves have lower variance due to a configuration difference with the logger.}
\end{figure}
\FloatBarrier

\subsection{Training Curve for GPT 6.7B}

\FloatBarrier
\begin{figure}[h!]
\includegraphics[width=\linewidth]{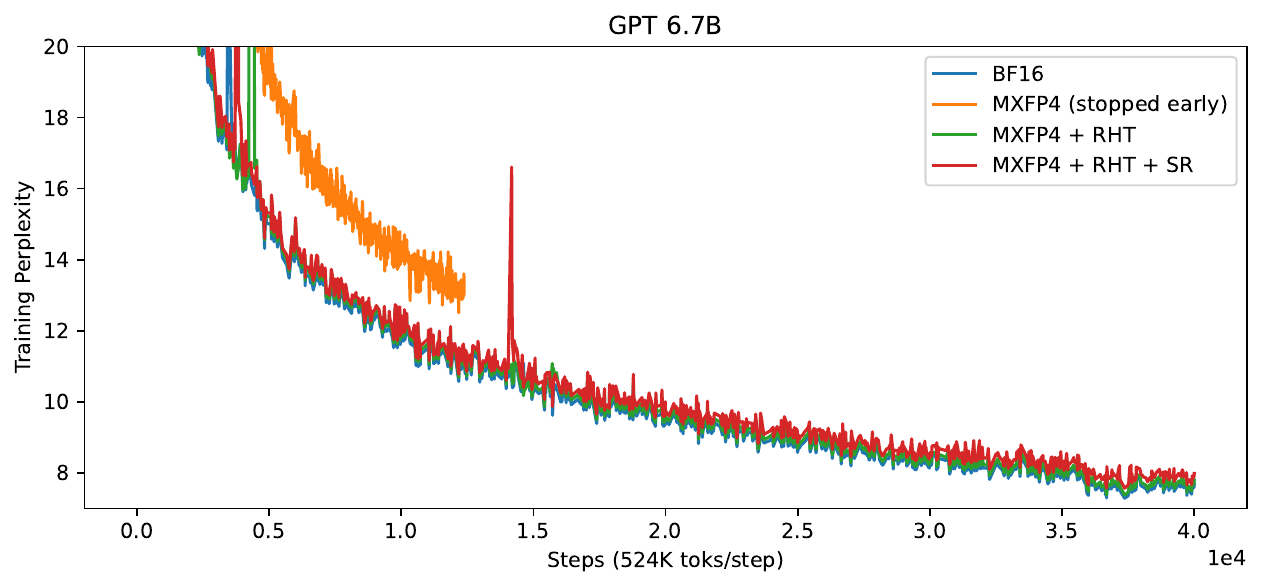}
\caption{Training perplexity for training GPT 6.7B for 20 billion tokens. All experiments used BF16 in the forward pass and the specified backward precision.}
\end{figure}
\FloatBarrier

\section{Experimental Setup Details}

All experiments were run on AWS P4 and G6e EC2 instances.
Our code was based off of the Megatron-LM codebase at Github commit \texttt{a4ad305d4b117217141730b9b18af52dda069450} and the Microsoft microxcaling codebase at Github commit \texttt{7bc41952de394f5cc5e782baf132e7c7542eb4e4}.
We used the NVIDIA Pytorch + Ubuntu 24.04 docker image, which contains a version of Transformer Engine 1.5 for the FP8 experiments.
All models were trained with the AdamW optimizer, FlashAttention \citep{fa1}, and the following hyperparameters:

\begin{table}[h]
\centering
\begin{tabular}{@{}cccc@{}}
\toprule
Hyperparameter             & GPT 345M & GPT 1.3B & GPT 6.7B \\ \midrule
Decoder Layers             & 24       & 24       & 32       \\
Hidden Size                & 1024     & 2048     & 4096     \\
Attention Heads            & 16       & 16       & 32       \\
Context Length             & 1024     & 2048     & 2048     \\
Max. Positional Embeddings & 1024     & 2048     & 2048     \\
Batch Size                 & 64       & 1024     & 256      \\
Learning Rate (LR)         & 0.00015  & 0.0002   & 0.00012  \\
Training Iterations        & 500000   & 20000    & 40000    \\
LR Scheduler               & Cosine   & Cosine   & Cosine   \\
LR Decay Iterations        & 320000   & 20000    & 40000    \\
Minimum LR                 & 1e-5     & 2e-5     & 1.2e-5   \\
Weight Decay               & 1e-2     & 0.1      & 0.1      \\
LR Warmup Fraction         & 0.01     & 0.01     & 0.01     \\
Gradient Clipping          & 1.0      & 1.0      & 1.0      \\ \bottomrule
\end{tabular}
\end{table}

\FloatBarrier
\section{Proof of Lemma 3.1}

\srlemma*

\begin{proof}
First, we show that Algorithm \ref{alg:sr2mx} produces an unbiased MXFP4 estimate of $\frac{3}{4}$ the input vector $v$.
Let $v\in \mathbb{R}^g$, where $g$ is the MX group size.
The input to $\texttt{stochastic\_round\_to\_FP4}$ is given by $w = \frac{3}{4} v/X$, where $X = 2^{\lfloor\log_2(\mbox{argmax}(|v|))\rfloor - 2}$.
Let $m = \mbox{argmax}(|v|)$.
Observe that the largest magnitude element of $w$ is 
$$\frac{3}{4} \frac{m}{2^{\lfloor\log_2(m)\rfloor - 2}} < \frac{3}{4} \frac{m}{2^{\log_2(m) - 3}} = \frac{3}{4}\times 8 = 6$$
By definition, $\texttt{stochastic\_round\_to\_FP4}(x)$ produces an unbiased estimate of $x$ as long as $x$ is ``within range'' -- i.e. it does not overflow outside of the range of representable values in FP4.
Since the maximum normal in FP4 is 6, $\texttt{stochastic\_round\_to\_FP4}(w)$ will give an unbiased FP4 estimate of $\frac{3}{4}v/X$.
Finally, from linearity of expectation, $X * \texttt{stochastic\_round\_to\_FP4}(w)$ gives an unbiased estimate of $\frac{3}{4}v$, as desired.

Now, we show that Algorithm \ref{alg:rhtbw} produces unbiased estimates of $\frac{dL}{dx}$ and $\frac{dL}{dW}$.
Let $C = \texttt{MXFP4\_GEMM}(A, B^T)$, where $A \in \mathbb{R}^{b\times n}$ and $B \in \mathbb{R}^{m \times n}$, and $g | n$.
We have that 
\begin{align}
\mathbb{E}\left[C_{ij}\right] &= \mathbb{E}\left[\sum_{k = 0}^{n/g}\left(X_{A_{i, kg:(k+1)g}}X_{B_{j, kg:(k+1)g}} \sum_{l=0}^g \left(A^{FP4}_{i, kg:(k+1)g}\right)_l \left(B^{FP4}_{j, kg:(k+1)g}\right)_l \right)\right] \\
&= \sum_{k = 0}^{n/g}\left(X_{A_{i, kg:(k+1)g}}X_{B_{j, kg:(k+1)g}} \sum_{l=0}^g \mathbb{E}\left[\left(A^{FP4}_{i, kg:(k+1)g}\right)_l \left(B^{FP4}_{j, kg:(k+1)g}\right)_l \right] \right) 
\end{align}

Where $A_{i, kg:(k+1)g}$ denotes the $k$-th size $g$ vector of the $i$-th row of $A$, $X_{A_{i, kg:(k+1)g}}$ is the scale of applying Algorithm \ref{alg:sr2mx} to $A_{i, kg:(k+1)g}$, and $A^{FP4}_{i, kg:(k+1)g}$ is the FP4 component of applying Algorithm \ref{alg:sr2mx} to $A_{i, kg:(k+1)g}$.
Since stochastic rounding is implemented with independent noise, $A^{FP4}_{i, kg:(k+1)g}$ and $B^{FP4}_{j, kg:(k+1)g}$ are independent random variables.
Thus, 
\begin{align}
\mathbb{E}\left[C_{ij}\right] &= \sum_{k = 0}^{n/g}\left(X_{A_{i, kg:(k+1)g}}X_{B_{j, kg:(k+1)g}} \sum_{l=0}^g \mathbb{E}\left[\left(A^{FP4}_{i, kg:(k+1)g}\right)_l \left(B^{FP4}_{j, kg:(k+1)g}\right)_l \right] \right) \\
&= \sum_{k = 0}^{n/g}\left(X_{A_{i, kg:(k+1)g}}X_{B_{j, kg:(k+1)g}} \sum_{l=0}^g \mathbb{E}\left[\left(A^{FP4}_{i, kg:(k+1)g}\right)_l\right] \mathbb{E}\left[\left(B^{FP4}_{j, kg:(k+1)g}\right)_l \right] \right) \\
&= \sum_{k = 0}^{n/g}\left(X_{A_{i, kg:(k+1)g}}X_{B_{j, kg:(k+1)g}} \sum_{l=0}^g\frac{3}{4}\frac{\left(A_{i, kg:(k+1)g}\right)_l}{X_{A_{i, kg:(k+1)g}}} \frac{3}{4}\frac{\left(B_{j, kg:(k+1)g}\right)_l}{X_{B_{j, kg:(k+1)g}}} \right)  \\
&= \frac{9}{16} \sum_{h=0}^{n} A_{ih}B_{jh} = \frac{9}{16} (AB^T)_{ij}
\end{align}

For $\frac{dL}{dx}$, $A = \frac{dL}{dy} \mbox{diag}(S) H$ and $B = W^T \mbox{diag}(S) H$, where $H$ is the block-diagonal ``small'' Hadamard matrix constructed in Section \ref{sec:smallrht}.
Here, $\mathbb{E}\left[\texttt{MXFP4\_GEMM}(A, B^T)\right] = \frac{9}{16} \frac{dL}{dy} \mbox{diag}(S) H H^T \mbox{diag}(S) W = \frac{9}{16}\frac{dL}{dy} W$.
For $\frac{dL}{dW}$, $A = \frac{dL}{dy}^T \mbox{diag}(S) H$ and $B = x^T \mbox{diag}(S) H$.
Here, $\mathbb{E}\left[\texttt{MXFP4\_GEMM}(A, B^T)\right] = \frac{9}{16} \frac{dL}{dy}^T \mbox{diag}(S) H H^T \mbox{diag}(S) x = \frac{9}{16}\frac{dL}{dy}^T x$.
Finally, scaling both values by 16/9 in lines 10 and 11 gives the desired unbiased gradient estimators.

\end{proof}

\section{Bounding the variance of SR with the RHT}
\rhtvar*

\begin{proof}
Consider two vectors of size $b$: $A, B \in \mathbb{R}^{b}$. 
Then, the output of Algorithm \ref{alg:sr2mx} on $A$ is a scale $X_A$ and vector $Q_A$ such that $\mathbb{E}[{Q_A}_i] = A_i/X_A$ and the expectation is taken over runs of Algorithm \ref{alg:sr2mx}.
Likewise, the output of Algorithm \ref{alg:sr2mx} on $B$ is a scale $X_B$ and vector $Q_B$ s.t. $\mathbb{E}[{Q_B}_i] = B_i/X_B$.
Let $C = X_AX_B\sum_{i = 1}^{b} {Q_A}_i{Q_B}_i$.
Since stochastic rounding is implemented with dithering on $A$ and $B$ with independent random noise,
\begin{align}
\mbox{Var}(C) &= X_AX_B\sum_{i = 1}^{b} \mbox{Var}({Q_A}_i{Q_B}_i) \\
&= X_AX_B\left(\sum_{i=1}^{b} \mbox{Var}({Q_A}_i)\mbox{Var}({Q_B}_i) + \mbox{Var}({Q_A}_i)\mbox{E}({Q_B}_i)^2 + \mbox{Var}({Q_B}_i)\mbox{E}({Q_A}_i)^2\right) \\
&= X_AX_B\left(\sum_{i=1}^{b} \mbox{Var}({Q_A}_i)\mbox{Var}({Q_B}_i) + \mbox{Var}({Q_A}_i)\left(\frac{B_i}{X_B}\right)^2 + \mbox{Var}({Q_B}_i)\left(\frac{A_i}{X_A}\right)^2\right) \\ 
&= \sum_{i=1}^{b} X_AX_B\mbox{Var}({Q_A}_i)\mbox{Var}({Q_B}_i) + \mbox{Var}({Q_A}_i)\left(\frac{X_AB_i^2}{X_B}\right) + \mbox{Var}({Q_B}_i)\left(\frac{X_BA_i^2}{X_A}\right).
\end{align}

Let $\alpha = A_i/X_A$. 
Since ${Q_A}_i$ is the output of stochastic rounding to FP4, ${Q_A}_i$ takes on values $f(\alpha)$ with probabilty $\frac{c(\alpha) - \alpha}{c(\alpha) - f(\alpha)}$ and $c(\alpha)$ with probability $\frac{\alpha - f(\alpha)}{c(\alpha) - f(\alpha)}$, where $f(\alpha)$ denotes the largest representable FP4 value $\le \alpha$ and $c(\alpha)$ denotes the smallest representable FP4 value $\ge \alpha$.
Observe that $f(\alpha)$ and $c(\alpha)$ are both guaranteed to exist due to line 4 in Algorithm \ref{alg:sr2mx}. 
Then, 

\begin{align}
\mbox{Var}({Q_A}_i) &= \frac{f(\alpha)^2(c(\alpha) - \alpha) + c(\alpha)^2(\alpha - f(\alpha))}{c(\alpha) - f(\alpha)} - \alpha^2 \\
&= \frac{(c(\alpha)^2 - f(\alpha)^2)\alpha + (f(\alpha) - c(\alpha))f(\alpha)c(\alpha)}{c(\alpha) - f(\alpha)} - \alpha^2 \\
&= (c(\alpha) + f(\alpha))\alpha - f(\alpha)c(\alpha) - \alpha^2.
\end{align}

Let $\delta^+ = c(\alpha) - \alpha$ and $\delta^- = f(\alpha) - \alpha$. 
Then, 

\begin{align}
\mbox{Var}({Q_A}_i) &= (c(\alpha) + f(\alpha))\alpha - f(\alpha)c(\alpha) - \alpha^2 \\
&= (2\alpha + \delta^- + \delta^+)\alpha - (\alpha + \delta^-)(\alpha + \delta^+) - \alpha^2 \\
&= -\delta^-\delta^+ = (c(\alpha) - \alpha)(\alpha - f(\alpha)) \\
&= \mathcal{O}((c(\alpha) - f(\alpha))^2).
\end{align}

Since $c(\alpha) - f(\alpha)$ is $O(\Delta)$, 
\begin{align}
\mbox{Var}(C) &= \mathcal{O}\left(b\Delta^4X_AX_B + \Delta^2\frac{X_A}{X_B} \sum_{i=1}^{b} B_i^2 + \Delta^2\frac{X_B}{X_A} \sum_{i=1}^{b} A_i^2\right) \\
&= \mathcal{O}\left(b\Delta^4X_AX_B + \Delta^2\frac{X_A}{X_B} \|B\|^2 + \Delta^2\frac{X_B}{X_A} \|A\|^2 \right).
\end{align}

Since $X_A = \Theta(\|A\|_\infty)$ and likewise for $B$, this reduces to 
\begin{align}
\mbox{Var}(C) &= \mathcal{O}\left(b\Delta^4\|A\|_\infty \|B\|_\infty + 2b\Delta^2\|A\|_\infty \|B\|_\infty \right)\\
&= \mathcal{O}(b\Delta^4\|A\|_\infty \|B\|_\infty)
\end{align}
If $A$ and $B$ are transformed by the RHT in the way Algorithm X does (i.e. $\tilde A \gets ASH^T$ and $\tilde B \gets HSB$), then we can bound $\|\tilde A\|_\infty$ and $\|\tilde B\|_\infty$. 
From \cite{qs}, $\forall i, 1 \le i \le b$, 

\begin{equation}
\mathbb{P}(|e_i\tilde A| \ge \epsilon) = \mathbb{P}(|e_iASH^T| \ge \epsilon) \le 2 \exp \left(\frac{-\epsilon^2 b}{2\|A\|^2}\right).
\end{equation}

From the union bound, 

\begin{align}
\mathbb{P}\left(\max_i |e_iASH^T| \ge \epsilon\right) &\le 2b \exp \left(\frac{-\epsilon^2 b}{2\|A\|^2}\right) \\
\mathbb{P}\left(\max_i |e_iASH^T| \ge \sqrt{\frac{2\|A\|^2}{b} \log \left(\frac{2b}{\epsilon}\right)} \right) &\le \epsilon.
\end{align}

so with probability $\ge 1-\epsilon$, 

\begin{equation}
\|A\|_\infty = \mathcal{O}\left(\sqrt{\frac{2\|A\|^2}{b} \log \left(\frac{2b}{\epsilon}\right)}\right)
\end{equation}

and with probability $\ge (1-\epsilon)^2$,

\begin{equation}
\mbox{Var}(C) = \mathcal{O}\left(\Delta^4 \|A\|\|B\| \log\left(\frac{2b}{\epsilon}\right)\right).
\end{equation}
\end{proof}

\end{document}